\documentclass[10pt, conference]{IEEEtran}

%Relative Regression

%subfig or subfigure?
%\usepackage[font=footnotesize,caption=false]{subfig}

\usepackage[tight,footnotesize]{subfigure}

\usepackage{amsmath}
\usepackage{amsthm}
\usepackage{amsfonts}
\usepackage{graphicx}
\usepackage{array}
\usepackage{cite}
\usepackage{comment}
\usepackage{lmodern}
\usepackage[]{algorithm2e}
\usepackage{epstopdf}
\newtheorem{theorem}{Theorem}

%\usepackage{subcaption}
% (2) specify encoding
\usepackage[T1]{fontenc}

% (3) load symbol definitions
\usepackage{textcomp}
%\usepackage[backend=biber]{biblatex}
%\addbibresource{bib.bib}

\newcommand{\Loss}{\mathcal{L}}

\newcommand{\fxi}{f(x_i)}
\newcommand{\fxj}{f(x_j)}
\newcommand{\fxk}{f(x_k)}
\newcommand{\Real}{\mathbb{R}}

\begin{document}

\title{Probabilistic Formulations of Regression with Mixed Guidance}
\author{Aubrey Gress, Ian Davidson
University of California, Davis\\
adgress@ucdavis.edu, davidson@cs.ucdavis.edu \\
}

\maketitle

\iffalse
\begin{abstract}
\textbf{TODO: No math in abstract?}
Regression problems assume every instance is annotated (labeled) with a real value, a form of annotation we call \emph{strong guidance}.  In order for these annotations to be accurate, they must be the result of a precise experiment or measurement.  However, in some cases additional \emph{weak guidance} might be given by imprecise measurements, a domain expert or even crowd sourcing.  Current formulations of regression are unable to use both types of guidance. We propose a regression framework that can also incorporate weak guidance  of the forms ``$y_i > y_j$,'' ``$y_i \in [a_i,b_i]$," ``$y_i$ is similar to $y_j$'' and ``$y_i$ is closer to $y_j$ than $y_k$.''  Consider learning to predict ages from portrait images, these new types of guidance allow weaker forms of guidance such as stating person $i$ is older than $j$, person $i$ is in their 20s or persons $i$ and $j$ are similar in age. These types of annotations can be easier to generate than strong guidance. We introduce a probabilistic formulation for these forms of weak guidance and show that the resulting optimization problems are convex. Our experimental results show the benefits of these formulations on several data sets.
\end{abstract}
\fi

\begin{abstract}
Regression problems assume every instance is annotated (labeled) with a real value, a form of annotation we call \emph{strong guidance}.  In order for these annotations to be accurate, they must be the result of a precise experiment or measurement.  However, in some cases additional \emph{weak guidance} might be given by imprecise measurements, a domain expert or even crowd sourcing.  Current formulations of regression are unable to use both types of guidance. We propose a regression framework that can also incorporate weak guidance based on relative orderings, bounds, neighboring and similarity relations. Consider learning to predict ages from portrait images, these new types of guidance allow weaker forms of guidance such as stating a person is in their 20s or two people are similar in age. These types of annotations can be easier to generate than strong guidance. We introduce a probabilistic formulation for these forms of weak guidance and show that the resulting optimization problems are convex. Our experimental results show the benefits of these formulations on several data sets.
\end{abstract}

%\begin{IEEEkeywords}
%\textbf{Regression}; \textbf{Weak Guidance};
%\end{IEEEkeywords}

\section{Introduction}
Regression methods model continuous values ($f(x)$)  from an instance ($x$).  Examples from the internet include estimating age from portraits of images \cite{eidinger2014age} and from science include estimating development stages of fruit fly embryos \cite{zhang2015deep}. However, producing accurate labelings can be difficult because the response must take a precise, continuous value.  
% Aubrey, we are only talking about existing work now. So leave out this sentence. Additionally, existing formulations cannot leverage alternative forms of guidance.

However, in many settings both exact and \emph{approximate} guidance are  available. Consider our motivating example of predicting age from a persons portrait.  While the age of some individuals may be known exactly, others may be unknown but can be approximated by say a human.  We consider four forms of approximate guidance in the regression setting:

\begin{itemize}
\item
\emph{Relative:} $f(x_i) > f(x_j)$. e.g. person $i$ appears to be older than person $j$.

\item
\emph{Bound:} $f(x_i) \in [c_{i_1}, c_{i_2}]$ where $c_{i_1}, c_{i_2} \in \Real$. e.g. person $i$ is in their 20s.

\item
\emph{Neighbor:} $|f(x_i) - f(x_j)| < |f(x_i) - f(x_k)|$. e.g. person $i$ is closer in age to person $j$ than person $k$.

\item
\emph{Similar:} $|f(x_i) - f(x_j)| \leq s$ where $s \in \Real^+$. e.g. person $i$ is close to person $j$ in age.

\end{itemize}

\begin{figure}[h!]
  \centering
\includegraphics[width=1\columnwidth]{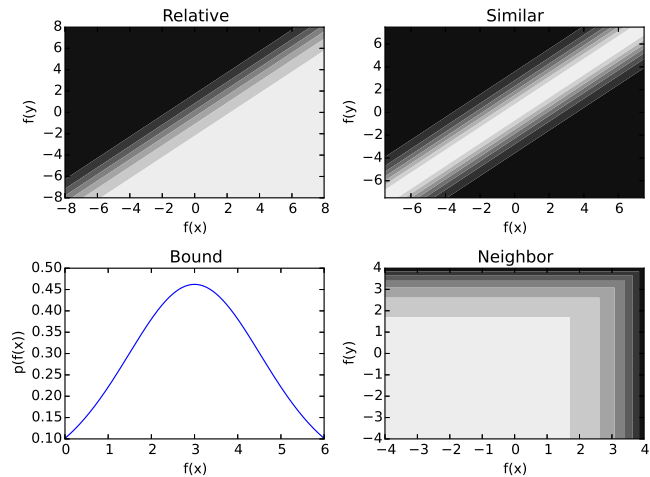}
  \caption{
\iffalse
  ???Pairwise should be relative??? ???Neighbor figure axis don't match description in text??? 
  \fi
  Probability distribution functions generated by how we model the different forms of weak guidance.  ``Bound'' is unary so the y axis is the value of the pdf.  The others are binary or ternary, so we use contour plots instead, with lighter regions indicating higher values.  In order to plot Neighbor in two dimensions we fixed the value of one of the instances in the triplet.} 
  \label{fig:distributions}
\end{figure}

We call this \textbf{weak guidance} as opposed to typical annotations which we call \textbf{strong guidance}.  We believe weak guidance can be obtained from a number of sources such as human experts, crowd sourcing or even approximate annotation functions. Consider our example again of estimating ages from portrait images.  If the annotator is sufficiently unsure of a person's age from a photo then they could provide an age interval based on number of years of education.

\textbf{Challenge of Using Weak Guidance.} These new forms of guidance lead to mathematical modeling challenges.  We would like to model them in a manner that captures the generating mechanisms for the guidance, can model the uncertainty in the guidance and can be efficiently optimized.  We solve these problem using additive noise models that give our formulation a probabilistic interpretation and lead to convex optimization problems. Figure \ref{fig:distributions} shows the probability distributions for each of our four annotations.  From Relative we can see that higher probability is associated with regions where $f(x)$ is greater than $f(y)$.  Likewise for Similar when $|f(x)-f(y)|$ is small.  Bound looks similar to a Gaussian distribution but with lighter tails.  Finally, because Neighbor is a ternary constraint, we hard coded $f(x_k) = 4$ and varied $f(x_j)$ on the horizontal axis and $f(x_i)$ on the vertical axis.  We can see that areas where $f(x_j)$ and $f(x_k)$ are far from $4$ have high probability.  

%We show the maximum likelihood estimator is convex for \emph{Relative, Bound} and \emph{Similar} types of weak guidance and give a convex relaxation for \emph{Neighbor} guidance. 

%Thus these distributions, in addition to leading to convex optimization problems (section \ref{sec:convexity}), seem to intuitively model the guidance well.

Our contributions are:

\begin{itemize}
\item We introduce a novel way of incorporating both strong and weak guidance into regression (section \ref{sec:regression-with-weak-guidance}).
\item We explore three new forms of weak guidance: Range, Neighbor and Similar (sections \ref{sec:range} through \ref{sec:similar}).  We also develop a new probabilistic formulation for Relative (section \ref{sec:relative}).
\item Our formulation has a probabilistic interpretation, which allows the use of classical statistical methods such as the maximum likelihood estimator.  This also makes it possible for our method to be part of larger probabilistic frameworks (section \ref{sec:regression-with-weak-guidance}).
\item We propose formulations for each form of guidance (sections \ref{sec:relative} through \ref{sec:similar}) and show they are convex (section \ref{sec:convexity}).
%\item We propose formulations for each form of guidance (sections \ref{sec:relative} through \ref{sec:similar}).
\item We experimentally  show our methods outperform semisupervised learning and intuitive baselines (section \ref{sec:experiments}).
\end{itemize}

\begin{table}[t]
\label{tab:notation}
\begin{tabular}{| c | p{6.8cm} |}
\hline
$\epsilon_i$ & A standard logistic random variable. \\ \hline
$F(x)$ & The logistic function, which is the cumulative distribution function of the standard logistic distribution. \\ \hline
$D$ & The set of standard guidance. \\ \hline
$\mathcal{G}$ & The set of weak guidance. \\ \hline
\end{tabular}

\caption{Notation used throughout the paper.}
\end{table}

Our paper is structured as follows.  First, we introduce and propose additive noise models for the four forms of guidance and discuss how they can be used with ridge regression.  We then experimentally compare our method to previous work and baselines.  Finally, we discuss related work and conclude.

Then we discuss and prove the convexity of the resulting optimization problem and discuss optimization issues.

\section{Regression with Weak Guidance}
\label{sec:regression-with-weak-guidance}
We assume a set of weak guidance $\mathcal{G}$ is also available to estimate the regression function $f(x)$ in addition to a small set of labeled training data $D$.  The goal is to use $\mathcal{G}$ to augment the learning process.

We consider four such forms of guidance: \texttt{Relative, Range, Neighbor} and \texttt{Similar}.  We model all forms of guidance probabilistically using additive logistic and exponential noise models.  Our work provides formulations for combining the weak guidance with a more traditional ``base'' estimator that uses the strong guidance.  In all our experiments we used ridge regression as the base estimator.  However, our methods for modeling the weak guidance can be used to augment other methods with probabilistic formulations such as the Lasso, logistic regression and Nadaraya Watson \cite{hastie2005elements}.  

The Maximum Likelihood Estimator (MLE), when using each form of weak guidance with ridge regression, will be of the form:
\begin{align}
\min_w ||Xw - y||^2 + \lambda_1 ||w||^2 + \lambda_2 \sum_{g \in \mathcal{G}} \Loss(g) 
\label{eqn:MLE}
\end{align}
The first two terms are the standard loss and regularization terms that appear in ridge regression.  The last term represents the loss over the weak guidance, where $\Loss$ depends on the form of the weak guidance and $\lambda_2$ is a hyperparameter.

\subsection{Relative Guidance}
\label{sec:relative}
The first form of weak guidance assumes that the  guidance is from a human or imprecise process that can produce estimates of the relative ordering of pairs of responses.  Returning to the problem of regressing age from portrait images, the annotator may not have a good sense of person $i$'s age, but may be confident that person $i$ is younger than person $j$.  We model this as the annotator ordering the instances based on noisy estimates of $\fxi$ and $\fxj$.  Mathematically, we model this as:
\begin{align}
\fxi > \fxj : & \ \ \fxi - \fxj + \epsilon_{ij} > 0 \\
\fxi < \fxj : & \ \ otherwise \nonumber
\label{eqn:relative}
\end{align}
where $\epsilon_{ij}$ is a random variable drawn from a Logistic distribution.  This models the annotator noisily estimating the difference in the pair of responses and producing an ordering based on the sign of the estimate.

It is straightforward to show that $P(\fxi > \fxj) = F(\fxi-\fxj)$ where $F$ is the logistic function $\frac{1}{1+e^{-x}}$.  Using this, relative measurements can be incorporated into a statistical model.  Compared to strong guidance, instead of measuring the loss over instance-label pairs, the loss is over the relative orderings of instance predictions.

Incorporating this guidance into equation \ref{eqn:MLE} leads to the following optimization problem:

\begin{align}
\min_w ||Xw-Y||^2 + \lambda_1 ||w||^2 + \lambda_2(\sum_{(i,j) \in \mathcal{G}}  \log(1 + e^{-(x_i-x_j)^T w}) )
\end{align}
Similar to logistic regression, the Relative guidance term uses the logistic loss.  However, the loss is over \emph{pairs of instances} rather than instance-label pairs.  Intuitively, this term leads to a large penalty when $\fxi$ and $\fxj$ are not in the relative order given by the annotator.

\subsection{Range Guidance}
\label{sec:range}
The next form of guidance assumes the annotators can provide potentially noisy guidance of the form $\fxi \in [a_i,b_i]$ where $a_i$ and $b_i$ are constants.  For the age estimation task, a human may not be confident of an exact age of the person, but may be confident that the age lies in a given range, such as early 20s or mid 30s.

Similar to the previous section, we assume an additive logistic noise model.    We model the probability of the event occurring as $P(f(x) + \epsilon_i \in [a_i, b_i])$.   This becomes:
\iffalse
\begin{align}
& P(f(x) + \epsilon_i \in [a_i,b_i] \\
& = P(f(x) + \epsilon_i \geq a_i, f(x) + \epsilon_i `\leq b_i) \nonumber \\
& = P(\epsilon_i \geq a_i - f(x), \epsilon_i \leq b_i - f(x)) \nonumber \\
& = F(b_i-f(x_i)) - F(a_i-f(x_i)) \nonumber
\end{align}
\fi
\begin{align}
& P(f(x) + \epsilon_i \in [a_i,b_i] = \\
& F(b_i-f(x_i)) - F(a_i-f(x_i)) \nonumber
\end{align}

Adding this guidance into equation \ref{eqn:MLE}:
\begin{align}
\label{eqn:bound}
\min_w & \ \ \ ||Xw-Y||^2 + \lambda_1 ||w||^2 -
\\ & \lambda_2(\sum_{i \in \mathcal{G}} \log ((b_i-f(x_i)) - F(a_i-f(x_i))) \nonumber
\end{align}
\iffalse
While the negative log of a difference of sigmoid functions is not in general convex, in section \ref{sec:convexity} we show this term is convex.
\fi
\subsection{Neighbor Guidance}
\label{sec:neighbor}
Our third form of guidance assumes the annotator can provide guidance of the form $|\fxi - \fxj| < |\fxi - \fxk|$.  For example, given a triplet of images the annotator may be confident that person $i$ is closer in age to person $j$ than $i$ is to $k$.  

Using an additive exponential noise model, we would like to model the probability of this event as $P(|\fxi - \fxj| + \epsilon_{ijk} \leq |\fxk - \fxi|)$ where $\epsilon_{ijk}$ is an exponential random variable, but this leads to a nonconvex optimization problem.  However, by assuming $\fxi < \fxk$, this probability becomes $P(|\fxi - \fxj| + \epsilon_{ijk} \leq \fxk - \fxi)$, which is equivalent to 
\begin{align}
\min ( & 1-H(\fxk - \fxj), \\
& 1-H(\fxk+\fxj - 2 \fxi)) \nonumber
\end{align}
where $H$ is the cumulative distribution function for the exponential distribution.  Assuming that $\fxi$ is close to $\fxj$ relative to $\fxk$, this can be simplified to:

\begin{align}
h(x_i, x_j, x_k) = \min (1-H(\fxk - \fxj), \\
 1-H(\fxk - \fxi)) \nonumber
\end{align}
which we've found works better in practice.

Incorporating this into equation \ref{eqn:MLE} leads to

\begin{align}
\min_w & \ \ \ ||Xw-Y||^2 + \lambda_1 ||w||^2
\\ &  - \lambda_2(\sum_{(i,j,k) \in \mathcal{G}} \log ( h(x_i, x_j, x_k)) \nonumber
\end{align}
%which we will show is convex in section \ref{sec:convexity}.
which is a convex optimization problem.

We feel this relaxation can be reasonably made.  For example, it may be clear to an annotator that person $i$ is much younger than person $k$, but there may be ambiguity in the relative ordering of $i$ and $j$.  
\iffalse
Because $\epsilon_{ijk}$ is an exponential random variable, it is not defined for nonpositive values.  Thus, we need to add the following constraints when solving the optimization problem:

\begin{align}
\fxk - \fxj \geq 0 \\
\fxk - \fxi \geq 0 \nonumber
\end{align}
\fi
This form of guidance can be thought of as a combination of Relative over $i$ and $k$, but with additional information about $j$.

\subsection{Similar Guidance}
\label{sec:similar}
The final form of guidance we consider assumes the annotator has a general sense of when $|\fxi - \fxj|$ is relatively small.  For example, an annotator may be confident that two people are roughly the same age.  To model this, we assume that the annotator has some global constant $s$ which they use as ``threshold'' for deciding if two responses are similar.  Modeling this with logistic error becomes $P(|\fxi - \fxj + \epsilon| \leq s)$.  This simplifies to:
\iffalse
\begin{align}
& P(|\fxi - \fxj + \epsilon| \leq s) \\
& = P(\fxi - \fxj + \epsilon \leq s, -(\fxi - \fxj + \epsilon) \leq s) \nonumber \\
& = P(\epsilon \leq s + \fxj - \fxi, \epsilon) \geq \fxj - \fxi - s) \nonumber \\
& = F(s-(\fxi+\fxj)) - F(-s-(\fxi + \fxj)) \nonumber 
\end{align}
\fi
\begin{align}
& P(|\fxi - \fxj + \epsilon| \leq s) = \\
& F(s-(\fxi+\fxj)) - F(-s-(\fxi + \fxj)) \nonumber 
\end{align}
For equation \ref{eqn:MLE} the loss will be 
\begin{align}
g(x_i, x_j) = \log ( & F(s-(\fxi+\fxj)) - \\ 
 & F(-s-(\fxi + \fxj)) \nonumber 
\end{align}
and the resulting optimization problem will be:
\begin{align}
\label{eqn:similar}
\min_w & \ \ \ ||Xw-Y||^2 + \lambda_1 ||w||^2
- \lambda_2(\sum_{(i,j) \in \mathcal{G}}  \log (g(x_i, x_j)
\end{align}
As with the previous forms of guidance, this optimization problem is convex.
%As with Range, the negative log of a difference of sigmoids is not in general convex, but in section \ref{sec:convexity} we will show this specific term is convex.

Our formulation assumes the existence of some constant $s$.  In practice, it could be set by a domain expert.  Alternatively, if it is not known then, as we did in our experiments, it can be tuned like a normal hyperparameter.

\section{Optimization of our Method}
\label{sec:optimization}
Convexity is a desirable property because it means the function can be more efficiently optimized \cite{boyd2004convex}.  These forms of guidance lead to some optimization challenges because while it is trivial to show Relative and Neighbor leads to convex optimization problems, the same is not true for Range and Similar.  Convexity is harder to show for the latter because their formulations lead to terms that have a \emph{difference} of convex functions, which are not convex in general.  However, we will show that the form these terms take lead to convex functions.  Finally, we will discuss numerical issues that may occur when using these methods.  
\iffalse
Finally we briefly discuss how weak guidance can be incorporated into other regression methods.
\fi
\subsection{Convexity}
\label{sec:convexity}
The convexity of Relative and Neighbor guidance follow from standard rules of the composition of convex function \cite{boyd2004convex}. Relative is the composition of a negative log, exponential and linear function which all preserve convexity because exponential and negative log are convex and nondecreasing.  Neighbor has a similar form but with an added negation and minimum function, which also preserve convexity.

Proving convexity of Bound and Similar is less straightforward because the log likelihood of these formulations contain terms of the form $\log(F(a(w)) - F(b(w)))$ for some functions $a$ and $b$.  In general, this function need not be convex because it contains a difference of convex functions \cite{boyd2004convex}.  However, we will prove convexity follows from a small set of assumptions which are satisfied by both Similar and Bound guidance.

\begin{theorem}[Convexity of Bound and Similar]
\label{thm:convexity}
Equations \ref{eqn:bound} and \ref{eqn:similar} are convex optimization problems
\end{theorem}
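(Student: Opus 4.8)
The plan is to observe that the only non-standard terms in Equations \ref{eqn:bound} and \ref{eqn:similar} share a common shape: after absorbing the leading minus sign (and recalling $\lambda_2 > 0$), each summand of the weak-guidance loss is of the form $-\log(F(a(w)) - F(b(w)))$, where $F$ is the logistic function and $a,b$ are \emph{affine} functions of $w$ with $a(w) > b(w)$ on the feasible region. For Bound, $a(w) = b_i - x_i^T w$ and $b(w) = a_i - x_i^T w$, so $a > b$ precisely when $b_i > a_i$; for Similar, $a(w) = s - (x_i + x_j)^T w$ and $b(w) = -s - (x_i + x_j)^T w$, so $a > b$ precisely when $s > 0$. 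Since precomposition with an affine map preserves convexity, and since the ridge terms $||Xw - Y||^2 + \lambda_1 ||w||^2$ are already convex and sums of convex functions are convex, it suffices to prove the single two-variable claim: $\Phi(u,v) = -\log(F(u) - F(v))$ is convex on the region $\{(u,v) : u > v\}$.

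The natural worry is that $\Phi$ is a \emph{difference} of convex functions and so not obviously convex. I would dissolve this by an explicit factorization. Writing $F(u) - F(v) = \frac{e^{-v} - e^{-u}}{(1+e^{-u})(1+e^{-v})}$ and then factoring $e^{-v} - e^{-u} = e^{-v}(1 - e^{-(u-v)})$, the term decomposes as
\begin{align}
\Phi(u,v) = v - \log\!\left(1 - e^{-(u-v)}\right) + \log(1+e^{-u}) + \log(1+e^{-v}). \nonumber
\end{align}
Each summand is convex in $(u,v)$: the term $v$ is affine; the two softplus terms $\log(1+e^{-u})$ and $\log(1+e^{-v})$ are standard convex functions; and the remaining term is $g(u-v)$ with $g(t) = -\log(1-e^{-t})$, a convex function (its second derivative $e^t/(e^t-1)^2$ is positive for $t > 0$) composed with the affine map $u - v$. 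Convexity of $\Phi$ then follows, and composing with the affine substitutions $u = a(w)$, $v = b(w)$ finishes both equations.

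I expect the main obstacle to be exactly the step that converts the stubborn $-\log(e^{-v} - e^{-u})$ piece into $v - \log(1 - e^{-(u-v)})$: getting the factorization and the sign of the exponent right, and confirming that the domain constraint $u > v$ is precisely what guarantees $1 - e^{-(u-v)} > 0$, so the logarithm is well-defined. Everything downstream is routine. As an alternative route that avoids the hand factorization, I could instead invoke log-concavity: the logistic density is log-concave, so by Prékopa's theorem the interval probability $F(b) - F(a) = \int_a^b f$ is jointly log-concave in the endpoints $(a,b)$, whence $-\log(F(b) - F(a))$ is convex. This is cleaner but relies on heavier machinery than the elementary decomposition above, which is why I would present the direct argument as the primary proof.
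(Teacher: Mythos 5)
Your proof is correct, but it takes a genuinely different route from the paper's. The paper proves log-concavity of $f(w) = F(a(w)) - F(b(w))$ along the parameter $w$ directly, via the criterion $f(w)f''(w) \leq (f'(w))^2$: it differentiates twice, invokes the assumptions $a'(w) = b'(w)$ and $a''(w) = b''(w) = 0$ to collapse the resulting expression, and after substantial algebra reduces the required inequality to the factorization $(F-G)^2(F^2 - F + G^2 - G) \leq 0$, which holds because $F, G \in [0,1]$. You instead prove the stronger statement that $\Phi(u,v) = -\log(F(u)-F(v))$ is \emph{jointly} convex on $\{u > v\}$, by the explicit additive decomposition $\Phi(u,v) = v - \log(1 - e^{-(u-v)}) + \log(1+e^{-u}) + \log(1+e^{-v})$, each summand being affine, softplus, or a convex function of the affine argument $u - v$; composing with the affine maps $u = a(w)$, $v = b(w)$ then finishes both equations. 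Your decomposition is verifiably correct (the factorization $e^{-v} - e^{-u} = e^{-v}(1 - e^{-(u-v)})$ and the second derivative $e^t/(e^t - 1)^2 > 0$ check out), and it buys two things the paper's argument does not: it eliminates the paper's assumption (2) that $a$ and $b$ have \emph{equal} gradients — which happens to hold for Bound and Similar, where $a$ and $b$ differ by a constant, but is not needed in your argument — and it replaces the paper's ``much algebra'' with a modular sum-of-convex-terms structure that is easy to audit. (Your Prékopa-based alternative makes the same point from above: joint log-concavity of the interval probability for any log-concave noise density, of which the paper's logistic-specific computation is a special case.) What the paper's approach buys in exchange is self-containment — elementary calculus plus one polynomial factorization, with no need to discover the right decomposition — and its stated assumptions make explicit exactly which structural features of ridge regression (affine $a, b$ with shared slope) are being used. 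One small caution: the paper's Equation \ref{eqn:bound} as printed drops an $F$ from the first term; you correctly proved convexity of the intended objective with $F(b_i - f(x_i)) - F(a_i - f(x_i))$, and your domain conditions $b_i > a_i$ and $s > 0$ are precisely the paper's assumption (1).
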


\begin{proof}
Our proof relies on three assumptions:
\begin{enumerate}
\item $a(w) > b(w)$ for feasible $w$
\item $a'(w) = b'(w)$
\item $a''(w) = b''(w) = 0$
\end{enumerate}
where $a'(w)$ denotes the first derivative of $a(w)$.  It is simple to show each of these assumptions hold for Similar and Bound for ridge regression.

Note that assumption (1) is only necessary for $\log(F(a(w)) - F(b(w)))$ to be defined.  To simplify notation we prove it for one dimension, but the multivariate extension follows naturally.  Also, let $a = a(w)$, $b = b(w)$, $F = F(a)$ and $G = F(b)$.

It suffices to show that $f(w) = F - G$ is log concave.  We prove this by showing the following \cite{boyd2004convex}:
\begin{align}
f(w)f''(w) \leq (f'(w))^2
\label{eqn:log-concave}
\end{align}

First the derivatives of $f(w)$:
\begin{align*}
& \frac{d}{dw} F - G = F(1-F)a' - G(1-G)b'\\
& \frac{d^2}{dw^2} F - G = (F-F^2)(1-2F)a'a' - (G-G^2)(1-2G)b'b' \\
\end{align*}
where we've applied assumption (3) to simplify the second derivative.

Using equation \ref{eqn:log-concave}, assumption (2) and much algebra all that remains to be shown is that the following is nonpositive:
\begin{align}
F^4 - F^3 + G^4 - G^3 - 2F^3G - 2FG^3 \\ 
+ 2F^2 G^2 +F^2 G + FG^2 \nonumber
\end{align}

This can be refactored into:
\begin{align}
(F-G)^2(F^2-F+G^2-G)
\end{align}

The leftmost term is nonnegative and because $F,G \in [0,1]$, $F^2-F$ and $G^2-G$ both are nonpositive.  Thus their product is nonpositive, completing the proof.
\end{proof}

\subsection{Numerical Issues}

While all the optimization problems are convex, the likelihood functions of Bound and Similar guidance have a difference of functions, which can lead to numerical stability issues when the difference is close to zero.  To address this we added a small positive constant to each term.  For example, for range guidance the term we used in the log likelihood is $\log(F(b_i-\fxi) - F(a_i-\fxi) + \delta)$ where $\delta$ is a small positive constant.  For our experiments we set $\delta = 1e^{-10}$.

\iffalse
\subsection{Incorporating Guidance}

The guidance we provide can be incorporating with many base estimators.  In this work we use it with ridge regression. 

???Isn't equation (1) the RR formulation you use. If so, why is this section needed???
\fi

\section{Experiments}
\label{sec:experiments}

Since our work explores new forms of weak guidance for regression our experiments will explore both the usefulness and limitations of weak guidance and also our framework.

Our experiments address the following questions:

\begin{itemize}
\item Does using weak guidance improve performance over using just strong guidance, intuitive baselines and semi-supervised methods? (Figures \ref{fig:synthetic}, \ref{fig:concrete}, \ref{fig:housing}, \ref{fig:fruitfly})
\item Do our methods for implementing weak guidance outperform simple baselines? (Figures \ref{fig:synthetic}, \ref{fig:concrete}, \ref{fig:housing}, \ref{fig:fruitfly})
\item Which of our four types of weak guidance lead to the greatest performance gains in synthetic and real world settings? (Figures \ref{fig:synthetic}, \ref{fig:concrete}, \ref{fig:housing}, \ref{fig:fruitfly})
\end{itemize}

\subsection{Methodology}

We now summarize our experimental design.  First we discuss the baseline methods we used.  Second, how we generated the weak guidance.  Third, how we set hyperparameters.  Finally we discuss the optimization libraries we used.  The data sets we used are summarized in table \ref{tab:data-sets}.

\textbf{Baselines.} Because the forms of guidance we propose are so diverse, we created multiple baselines depending on the form of guidance.  The goal of these baselines is to either compare our work to previously proposed methods or, if no previous work exists, intuitive ways of encoding the guidance into the standard regression setting.  The baselines we used are:

\begin{itemize}
\item Relative guidance $\fxi < \fxj$: Because this type of guidance has been explored before (see section \ref{sec:related-work}), the goal of our experiments is to show our method performs comparably to previous work while having a probabilistic interpretation.  Thus, we use the method proposed by \cite{zhu2007kernel}.
\item Range guidance $\fxi \in [\fxj, \fxk]$: Here, rather than use our weak range guidance we use strong guidance by computing the quartiles of the data set and setting $x_i$'s label to the closest quartile.
\item Neighbor guidance: As mentioned in section \ref{sec:neighbor}, our convex relaxation over a triplet of instances $\{i,j,k\}$ effectively models relative guidance over $i$ and $k$ with the added information that $i$'s response is closer to $j$'s than $k$'s.  As such, we use Relative guidance as a baseline in order to see if this extra information is useful.
\end{itemize}
Because the \texttt{Similar} guidance is so different, we could not think of a reasonable baseline.

In addition to these baselines we also compared all our methods to the following classic and semi-supervised regression methods with the expectation they perform no worse than them. We can view semi-supervised regression methods as a natural competitor to our method as they also use unlabeled instances.
\begin{itemize}
\item Ridge Regression \cite{hastie2005elements}
\item Laplacian Ridge Regression \cite{belkin2006manifold} using a fully connected graph and the Gaussian kernel.
\end{itemize}

For the weak guidance we sampled unlabeled points that were not in the test set and generated the guidance based on their responses as follows:

\begin{itemize}
\item Relative: Uniformally sample pairs of training instances.
\item Range: Uniformally sample instances and bound using upper and lower quartiles.
\item Neighbor: Uniformly sample triplets of training instances.
\item Similar: Set $s=.1\delta$ where $\delta$ is the difference the largest and smallest responses in the data set.  Uniformly sample pairs of instances such that $|y_i - y_j| \leq s$.
\end{itemize}

For all experiments we standardized the covariates and tuned regularization parameters from the set $\{10^k | k \in [-8,-7,...,7,8]\} \cup \{0\}$ (including the bandwidth parameter in Laplacian Ridge Regression) using 10 times random cross validation.  For the Similar guidance we selected $s$ from $\{.05\delta, .1\delta, .2\delta, .3\delta\}$ using 10 times random cross validation.  The data sets we experimented with are summarized in table \ref{tab:data-sets}.

\begin{table}[t]
\label{tab:data-sets}
\begin{tabular}{| c | p{6.1cm} |}
\hline
Synthetic &  We generated a random linear function $w^T x$ where $w$ was sampled from a standard 50 dimensional Gaussian distribution.  We sampled 500 instances $x_i$ from the same distribution, again with each of the 50 components sampled from a standard Gaussian distribution.  We then set $y_i = w^T x_i + \epsilon_i$ where $\epsilon_i$ was drawn from a standard Gaussian distribution. \\ \hline
Concrete \cite{yeh1998modeling} & Predicting the compressive strength of concrete as a function of its age and 7 ingredients. \\ \hline
Housing \cite{harrison1978hedonic} & Predicting housing prices in Boston as a function of various measures of the house and neighborhood. \\ \hline
Fruit Fly \cite{kazmar2013drosophila} & Predicting the developmental stage of fruit fly embryos from images.  We used the already processed features by \cite{kazmar2013drosophila} and selected the 50 covariates most correlated with the response variables. \\ \hline
\end{tabular}
\caption{Data sets used in experiments}
\end{table}
\iffalse
We experimented on three data sets:
\begin{itemize}
\item Synthetic: We generated a random linear function $w^T x$ where each of the 50 components of $w$ was sampled from a standard Gaussian distribution.  We then sampled 500 instances $x_i$ from the same distribution, again with each of the 50 components sampled from a standard Gaussian distribution.  We then set $y_i = w^T x_i + \epsilon_i$ where $\epsilon_i$ was drawn from a standard Gaussian distribution.
\item Concrete \cite{yeh1998modeling}: Predicting the compressive strength of concrete as a function of its age and 7 ingredients.
\item Housing \cite{harrison1978hedonic}: Predicting housing prices in Boston as a function of various measures of the house and neighborhood.
\end{itemize}
\fi

We programmed our solvers using CVXPY \cite{cvxpy} and SciPy's optimize library \cite{scipy}.  The Concrete and Housing data sets were downloaded from the UCI repository \cite{Lichman:2013}.  Code is available at https://github.com/adgress/ICDM2016.

\subsection{Analysis of Experiments}

\texttt{Synthetic Experiments.} Figure \ref{fig:synthetic} shows the results of all four types of guidance in the idealized setting where we know their exists a strong linear relationship between the dependent and independent variables.  We see that for all four types of guidance our method outperforms the baseline of ridge regression.

As to which form of guidance is most useful in this idealized setting, the answer is not conclusive. We find that \texttt{Bound} guidance with small amounts of labeled data provides the largest improvement over the baseline ridge regression method. However, with larger amounts of labeled data Neighbor and Similar guidance perform comparable to \texttt{Bound} guidance. Finally, importantly, we find \texttt{Neighbor} guidance outperforms \texttt{Relative} guidance which indicates that the extra information provided in the former is usefully encoded.

\texttt{Real World Data sets.} 
We next experimented with the concrete data set (Figure \ref{fig:concrete}) which has a less strong linear relationship between the independent and dependent variables. Experimenting with this data set allows us to investigate how our method performed when the assumptions of the ``base estimator,'' ridge regression in this case, are more heavily violated. We find our method performs as well as ridge regression and sometimes better. This makes sense because even if the linearity assumption is violated, extra guidance can aid in estimated the best linear approximation to the true function.  It is notable that Relative performs \textit{worse} than ridge regression when given only 20 pairs, though it performs better with 50 pairs.  We suspect this is due to the increased variance of model selection.  Using weak guidance increases the number of hyperparameters that need to be tuned which can make the method perform worse if the amount of information derived from the weak guidance isn't large enough.  This suggests that some minimum amount of weak guidance is necessary in order for it to be valuable.

The housing data set (Figure \ref{fig:housing}) provides an interesting contrast  to the concrete data set as the linearity assumption is more accurate here. We see that for small amounts of labeled data all forms of weak guidance offer measurable improvements over the baseline methods. For larger amounts of labeled data the guidance does not seem to improve performance.  This suggests that weak guidance can help a system reach the ``peak performance'' faster than using only strong guidance.

Finally, the fruit fly data set (Figure \ref{fig:fruitfly}) gives an important example of how this guidance can be applied to a real world problem.  Due to the poor performance of Laplacian Ridge Regression, we did not include its results in the figure.  We suspect this poor performance is due to the ``cluster assumption.''  As in the previous experiments, Relative, Neighbor and Similar all seem to be useful.

For all but the synthetic data, the performance of Bound doesn't seem to vary dramatically from the baseline we proposed.  While the fact that Bound performs well with synthetic data suggests the method may be useful, the experiments on real data suggest that the simple baseline we proposed can work well enough for many applications.  Future work may be necessary investigate the value of this form of guidance over the baselines.

As we noted before, \cite{zhu2007kernel} suggested a different formulation for Relative, so we compared their formulation to ours. Our implementation of Relative seemed to perform comparably \cite{zhu2007kernel}.  This makes sense because both methods model the same basic form of weak guidance.  However, because \cite{zhu2007kernel} did not model the other forms of weak guidance we proposed, we did not compare their method to our other forms of weak guidance.

\section{Related Work}
\label{sec:related-work}

\textbf{Mixed Guidance in Classification}

Much work has studied the problem of using mixed guidance for classification, such as guidance of the form ``is $f(x_i)$'s label $c$?'' \cite{qi2008two}.

\cite{sculley2010combined} combines regression and ranking losses but the ranking guidance is derived from the given response variables, while our guidance is provided in addition to the given labeled data.

\textbf{Mixed Guidance Regression}

While there has been much work on mixed guidance for classification (such as \cite{qi2008two} and \cite{sculley2010combined}), we have found little work for regression.  This is noteworthy because it is arguable more challenging for a human to provide high quality outputs in the regression setting than the classification setting. 

The work most similar to ours explores guidance of the form ``is $f(x_i) > f(x_j)$''  \cite{zhu2006semi, zhu2007kernel, sculley2010combined}.  They implement this guidance using the hinge loss on pairs of predictions.  These works showed this guidance can be valuable.  However, these works are limited because:

\begin{itemize}
\item Their methods do not have a probabilistic interpretation, meaning standard statistical methods such as AIC and F Statistics cannot be applied.
\item Furthermore, without a probabilistic interpretation these methods do not lend themselves to principled active learning extensions.
\item They only consider relative guidance, while we explore additional forms of guidance.
\end{itemize}

\textbf{Learning to Rank}

Learning to rank is the problem of estimating a function to rank data from a training set that is labeled in some manner \cite{liu2009learning}.  The labels can take several forms such as relative comparisons denoting which item should have higher rank and list-wise comparisons showing how a set of items should be ordered.

While one of our forms of weak guidance, Relative, is similar to what has been explored in the learning to rank settings, the goal of our work is to estimate a regression function rather than an ordering of data.

\iffalse
\textbf{Bradley-Terry Models}

The Bradley-Terry model is a statistical model for estimating latent scores using relative measurements \cite{bradley1952rank}.  It assumes each instance $i$ has a latent score value $\beta_i$, that these scores are indirectly measured using only pairwise relative comparisons of the form ``is $\beta_i$ greater than $\beta_j$?'' and that the probability of the relative measurement reporting that $\beta_i$ being greater than instance $\beta_j$ is $P(\beta_i > \beta_j) = \frac{e^{\beta_i}}{e^{\beta_i}+e^{\beta_j}}$.  This model has been applied to estimating the skill of contestants in competitions where no pointwise estimate of skill is available.  It has also been used in the machine learning community for learning to rank \cite{szummer2011semi, chen2013pairwise, radlinski2007active}.  However, this only estimates the latent scores, while our work seeks to estimate a regression function.
\fi

\iffalse
\textbf{Semisupervised Learning}

Semisupervised methods use available unlabeled data augment the learning process \cite{zhu2005semi}.  However, theoretical work suggests unlabeled data only helps when the ``cluster assumption'' holds \cite{lafferty2007statistical, singh2009unlabeled, azizyan2013density}.  Additionally, even if the cluster assumptions holds, weak guidance could still be used to augment the learning process.

\fi
\section{Conclusion and Future Work}

We proposed a new probabilistic formulation for four forms of weak guidance.     In addition to proposing a new formulation for Relative, we presented three new forms of weak guidance.  Experimentally we showed these forms of guidance can lead to strong performance, even when linearity assumptions do not hold.

A natural extension of this work is how to intelligently make weak-guidance-based queries.  We want to explore new, principled methods for active learning based on weak guidance.  These methods could leverage the probabilistic interpretations we proposed by using techniques from optimal experiment design.

For Similar we assumed $s$ was a constant, but it would be interesting to explore modeling $s$ as a nonconstant function that can vary throughout the domain.

Finally, we suspect there are many new forms of weak guidance yet to explore and are interested in developing new ways to take advantage of weak guidance.

\section{Acknowledgments}

The authors were supported by ONR grant N000141110108 and an Amazon Web Services grant.

%\bibliographystyle{plain}
%\bibliography{bib}

\bibliographystyle{IEEEtran}
\bibliography{IEEEabrv,bib}

\begin{figure}[h!]
  \centering
\includegraphics[width=\columnwidth]{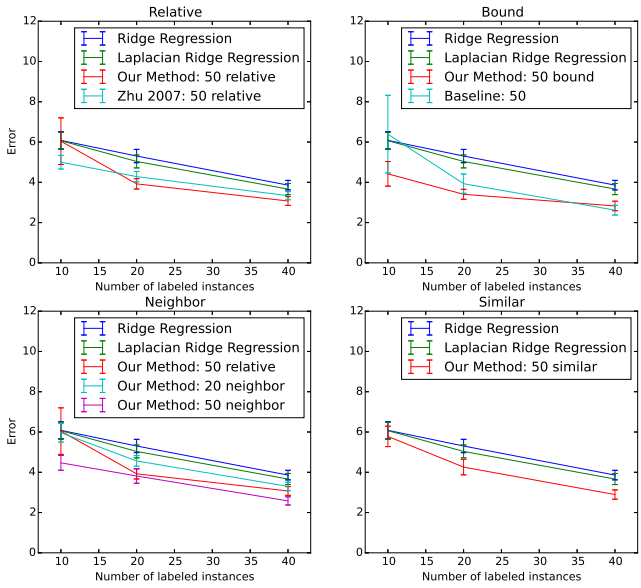}
  \caption{Our guidance on a synthetic data set.} 
  \label{fig:synthetic}
\end{figure}

\begin{figure}[h!]
  \centering
\includegraphics[width=\columnwidth]{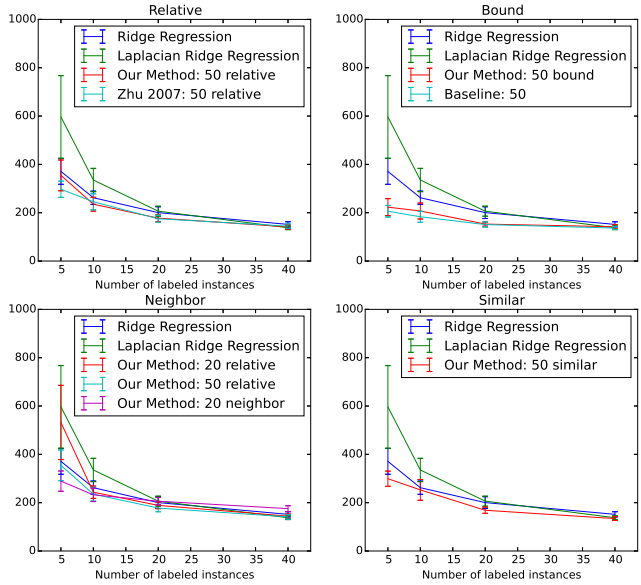}
  \caption{Our guidance on the Concrete Strength data set.}
  \label{fig:concrete}
\end{figure}

\begin{figure}[h!]
  \centering
\includegraphics[width=\columnwidth]{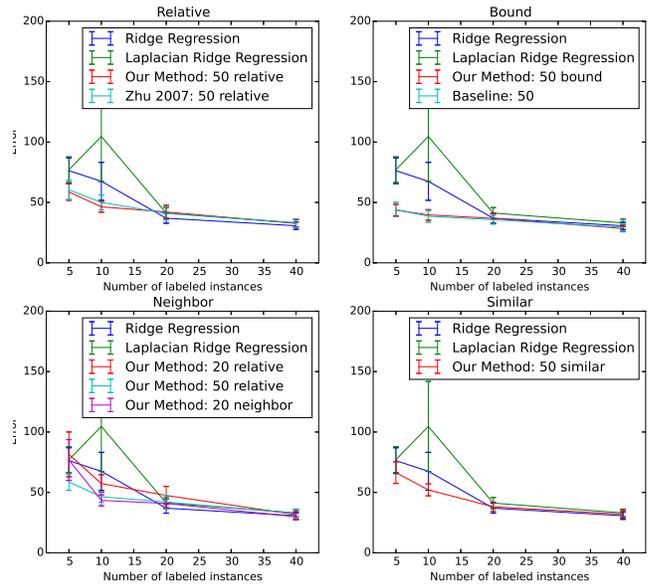}
  \caption{
 Our guidance on the Boston Housing data set.
 }
  \label{fig:housing}
\end{figure}

\begin{figure}[h!]
  \centering
\includegraphics[width=\columnwidth]{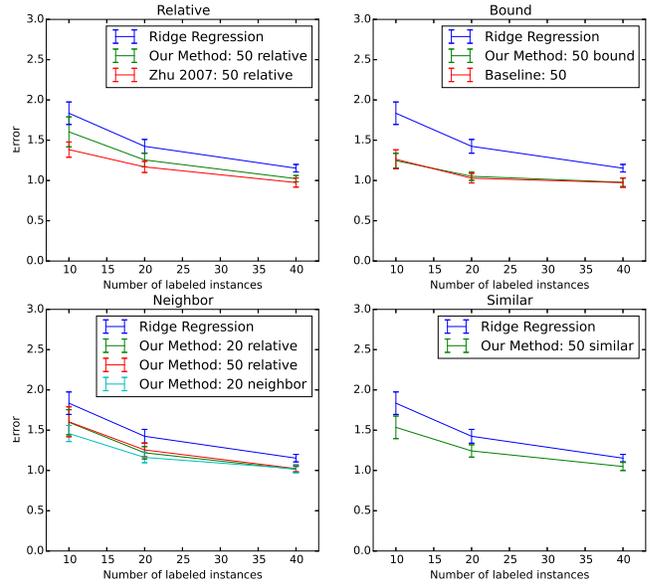}
  \caption{
 Our guidance on the Fruit Fly data set.
 }
  \label{fig:fruitfly}
\end{figure}

\end{document}